\newcolumntype{t}{>{\bfseries\ttfamily}c}
\newcommand{\N}{{\cal N}}
\newcommand{\M}{{\cal M}}
\newcommand{\F}{{\cal F}}
\renewcommand{\nl}{\tabularnewline} 
\newcolumntype{C}{>{\centering}X}
\newcolumntype{P}[1]{>{\centering\arraybackslash}p{#1}}
\newcolumntype{M}{>{$}c<{$}}
\newcolumntype{L}{>{$}l<{$}}
\newcolumntype{T}{>{$}X<{$}}
\newcommand{\IP}{ILP model\xspace}
\newcommand{\CP}{CP$_{O}$ model\xspace}
\newcommand{\CPN}{CP$_{N}$ model\xspace}
\newcommand{\setARE}{\N_{m}^A}
\newcommand{\setExtFT}{\N_m^O}
\newcommand{\setProofEFT}{\overline{N_m^O}}
\newcommand{\setNJfm}{\N_m^f}
\newcommand{\setNJm}{\N_m^{NS}}
\newcommand{\Seq}{\texttt{sequencing}\xspace}
\newtheorem{filter}{Rule}
\newcommand{\refR}[1]{Rule~\ref{filt:#1}\xspace}
 \newtheorem{example}{Example}
 \newtheorem{theorem}{Theorem}
 \newtheorem{definition}{Definition}
\begin{document}

\title{Filtering rules for flow time minimization in a Parallel Machine Scheduling Problem} 
\author{Margaux Nattaf}
\affil{Univ. Grenoble Alpes, CNRS, Grenoble INP, G-SCOP, 38000 Grenoble, France\\ 
\texttt{margaux.nattaf@grenoble-inp.fr}}
\author{Arnaud Malapert}
\affil{ Universit\'e C\^ote d'Azur, CNRS, I3S, France \\
 \texttt{arnaud.malapert@unice.fr}}

\setcounter{tocdepth}{3}

\maketitle

\begin{abstract}
  This paper studies  the scheduling of jobs of  different families on
  parallel  machines with  qualification constraints.   Originating from
  semi-conductor manufacturing, this constraint imposes a time threshold
  between the execution of two jobs  of the same family.  Otherwise, the
  machine becomes disqualified for this family.  The goal is to minimize
  both the flow time and the number of disqualifications.
  Recently, an  efficient constraint  programming model  has been
  proposed.  However, when priority is given to the flow time objective,
  the efficiency of the model can be improved.
  
  This paper uses a polynomial-time  algorithm which minimize the flow
  time for a single machine relaxation where disqualifications are not
  considered.   Using this  algorithm  one can  derived filtering
  rules on  different variables of  the model. Experimental  results are
  presented showing  the effectiveness  of these  rules.
  They  improve  the competitiveness  with  the  mixed integer  linear
  program of the literature.
  
 \textbf{keywords : }Parallel Machine  Scheduling,  Job families , Flow
    time ,  Machine  Disqualification,  Filtering  Algorithm ,  Cost-Based Filtering.
\end{abstract}

\section{Introduction}
This  paper considers  the  scheduling of  job  families on  parallel
machines  with time  constraints  on machine  qualifications. In  this
problem,  each job  belongs  to a  family  and a  family  can only  be
executed on a subset of  qualified machines. In addition, machines can
lose their qualifications during the schedule.  Indeed, if no job of a
family is  scheduled on a machine  during a given amount  of time, the
machine  lose  its qualification  for  this  family.  The goal  is  to
minimize the  sum of job completion  times, i.e. the flow  time, while
maximizing the number of qualifications at the end of the schedule.

This problem,  called scheduling Problem with  Time Constraints (PTC),
is introduced  in~\cite{Obeid2014}.  It  comes from  the semiconductor
industries. Its goal is to  introduce constraints coming from Advanced
Process Control (APC) into a scheduling problem.  APC's systems are used
to control processes and equipment  to reduce variability and increase
equipment efficiency.  In PTC, qualification constraints and 
objective come from APC and more precisely from what is called Run to
Run control.  More details about the industrial problem can  be found
in~\cite{Nattaf2018C}.

Several      solution     methods       has      been     defined  for
PTC~\cite{Obeid2014,Nattaf2018C,Nattaf2019C}.    In  particular,   the
authors of~\cite{Nattaf2019C} present two pre-existing models: a Mixed
Integer  Linear  Program  (MILP)  and a  Constraint  Programming  (CP)
model. Furthermore, they define a new CP model taking advantage 
of advanced CP features to model machine disqualifications.  However, the
paper shows  that when the  priority objective  is the flow  time, the
performance of the CP model can be improved. 

The objective  of this  paper is  to improve  the performances  of the
CP model for the flow time objective.  To do so, a relaxed version of PTC
where  qualification  constraints are  removed is considered. For this
relaxation, the  results of~\citet{Mason1991} are adapted  to define an
algorithm  to optimally  sequence jobs  on one  machine in  polynomial
time.   This  algorithm  is  then used  to  define  several  filtering
algorithms for PTC.  

Although,  the  main  result  of this  paper  concerns  the  filtering
algorithms for PTC, there is also two more general results incident to
this work.   First, those  algorithms can be  directly applied  to any
problem having  a flow time  objective  and which  can be relaxed to a
parallel machine  scheduling problem with  sequence-independent family
setup times.  Secondly,  the approach is related  to cost-based domain
filtering~\cite{foccaci.ea-99}, a general approach to define filtering
algorithms for optimization problems. 

The paper is organized as follows. Section~\ref{sec:pb} gives a formal
description of the problem the CP model for PTC.
Section~\ref{sec:relaxation}  presents  the  relaxed problem  and  the
optimal machine flow time computation of the relaxed problem. 
Section~\ref{sec:filter} shows how this flow time is used to define
filtering     rules    and     algorithms    for     PTC.     Finally,
Section~\ref{sec:expe}   shows  the   performance  of   the  filtering
algorithms and compares our results to the literature. 

\section{Problem description and modeling}
\label{sec:pb}

In this section,  a formal description of PTC is  given.  Then, a part
of the CP  model of~\cite{Nattaf2019C} is presented.  The  part of the
model presented is  the part that is useful to  present our cost based
filtering   rules   and   correspond    to   the   modeling   of   the
relaxation. Indeed, as  we are interested in the  flow time objective,
the machine qualification modeling is not presented in this paper.

\subsection{PTC description}

Formally, the problem takes as input a set of jobs,
$\N=\{1,\dots,N\}$, a set of families $\F =\{1,\dots, F\}$ and a set
of machines, $\M=\{1,\dots,M\}$. Each job $j$ belongs to a family and the
family associated with $j$ is denoted by $f(j)$. For each family $f$,
only a subset of the machines $\M_f \subseteq \M$, is able to process
a job of $f$. A machine $m$ is said to be qualified to process a family
$f$ if $m \in \M_f$.
\noindent
Each  family $f$ is associated with the following parameters:

\begin{compactitem}
\item $n_f$ denotes the number of jobs in the family. Note that
  $\sum_{f \in\F} n_f = N$.
\item $p_f$ corresponds to the processing time of jobs in $f$.
\item $s_f$ is the setup time required to switch
  the production  from a job  belonging to a family  $f' \neq f$  to the
  execution of a job of $f$. Note that this setup time is independent of
  $f'$, so it is sequence -independent.  In addition, no setup time is
  required neither between the execution of two jobs of the same family
  nor at the beginning of the schedule, i.e. at time $0$.
\item $\gamma_f$ is the threshold value for the time interval between
  the execution of two jobs of $f$ on the same machine. Note that this
  time interval is computed on a start-to-start basis, i.e. the
  threshold is counted from the start of a job of family $f$ to the
  start of the next job of $f$ on machine $m$. Then, if there is a time
  interval $]t,t +\gamma_f]$ without any job of $f$ on a machine, the
  machine lose its qualification for $f$. 
\end{compactitem}

The objective is to minimize both the sum of job completion times,
i.e. the flow time, and the number of qualification looses or
disqualifications.  An  example  of PTC  together  with  two  feasible
solutions is now presented.

\begin{example}
  \label{ex:PTC}
  Consider  the  instance  with  $N=10,\   M=2$  and  $F=3$  given  in
  Table~\ref{subfig:exOptInst}. 
  Figure~\ref{fig:exPTC} shows two feasible solutions.
  The first solution, described by Figure~\ref{subfig:exOptCf}, is optimal in
  terms of flow time. For this solution, the flow time is equal to
  $1+2+9+15+21+1+2+12+21+30 = 114$ and the number of qualification
  losses is $3$. Indeed, machine $1$ ($m_1$) loses its qualification for
  $f_3$ at time $22$ since there is no job of $f_3$ starting in interval
  $]1,22]$ which is of size $\gamma_3 = 21$. The same goes for $m_2$ and
  $f_3$ at time $22$ and for $m_2$ and $f_2$ at time $26$.
  \\
  The second solution, described by Figure~\ref{subfig:exOptDisq}, is
  optimal in terms of number of disqualifications. Indeed, in this
  solution, none of the machines loses their qualifications. However,
  the flow time is equal to $1+2+9+17+19+9+18+20+27+37=159$. 

  \begin{figure}
    \centering
    \subfigure[Instance with $N=10,\ M=2$ and $F=3$\label{subfig:exOptInst}]{
       \begin{tabularx}{0.26\linewidth}[b]{c|CCCCC}
        \toprule
       $f$ & $n_f$ & $p_f$ & $s_f$ & $\gamma_f$ & $ \M_f$\nl
        \midrule
        1&3&9&1&25&$\{2\}$\nl
        2&3&6&1&26&$\{1,2\}$\nl
        3&4&1&1&21&$\{1,2\}$\nl
        \bottomrule
      \end{tabularx}
    }
    \subfigure[An optimal solution for the flow time objective \label{subfig:exOptCf}]{
      \begin{tikzpicture}[xscale=0.25]
        \node (O) at (0,0) {};
        \draw[->] (O.center) -- ( 31,0); 
        \draw[->] (O.center) -- (0,1.25); 
        \draw (0, 0.25) node[left] {$m_1$};
        \draw (0, 0.75) node[left] {$m_2$};
        \draw (2,0) -- (2,-0.1) node[below] {$2$};
        \draw (9,0) -- (9,-0.1) node[below] {$9$};
        \draw (12,0) -- (12,-0.1) node[below] {$12$};
        \draw (15,0) -- (15,-0.1) node[below] {$15$};
        \draw (21,0) -- (21,-0.1) node[below] {$21$};
        \draw (22,0) -- (22,-0.1) node[below] {\scriptsize $22$};
        \draw (26,0) -- (26,-0.1) node[below] {\scriptsize $26$};
        \draw (30,0) -- (30,-0.1) node[below] {$30$};
        \draw[fill= red!90!black!40!]  (3,0.5) rectangle (12,1)
        node[midway] {$f_1$}; 
        \draw[fill= red!90!black!40!]  (12,0.5) rectangle (21,1)
        node[midway] {$f_1$}; 
        \draw[fill= red!90!black!40!]    (21,0.5) rectangle (30,1)
        node[midway] {$f_1$}; 
        \draw[fill=blue!40!]  (3,0) rectangle (9,0.5)
        node[midway] {$f_2$}; 
        \draw[fill=blue!40!]  (9,0) rectangle (15,0.5)
        node[midway] {$f_2$}; 
        \draw[fill=blue!40!]  (15,0) rectangle (21,0.5)
        node[midway] {$f_2$}; 
        \draw[fill=green!80!black!40!] (0,0) rectangle (1,0.5) node[midway,black] {$f_3$};
        \draw[fill=green!80!black!40!]  (1,0) rectangle (2,0.5) node[midway] {$f_3$};
        \draw[fill=green!80!black!40!]  (0,0.5) rectangle (1,1) node[midway] {$f_3$};
        \draw[fill=green!80!black!40!]  (1,0.5) rectangle (2,1) node[midway] {$f_3$};
      \end{tikzpicture}
    }
    
    \subfigure[An optimal solution for qualification losses \label{subfig:exOptDisq}]{
      \begin{tikzpicture}[xscale=0.25]
        \node (O) at (0,0) {};
        \draw[->] (O.center) -- ( 38,0);
        \draw[->] (O.center) -- (0, 1.25);
        \draw (0, 0.25) node[left] {$m_1$};
        \draw (0, 0.75) node[left] {$m_2$};
        
        \draw (2,0) -- (2,-0.1) node[below] {$2$};
        \draw (3,0) -- (3,-0.1) node[below] {$3$};
        \draw (9,0) -- (9,-0.1) node[below] {$9$};
        \draw (11,0) -- (11,-0.1) node[below] {$11$};
        \draw (17,0) -- (17,-0.1) node[below] {$17$};
        \draw (20,0) -- (20,-0.1) node[below] {$20$};
        \draw (27,0) -- (27,-0.1) node[below] {$27$};
        \draw (37,0) -- (37,-0.1) node[below] {$37$};
        \draw[fill= red!90!black!40!]  (0,0.5) rectangle (9,1)
        node[midway] {$f_1$}; 
        \draw[fill= red!90!black!40!]  (9,0.5) rectangle (18,1)
        node[midway] {$f_1$}; 
        \draw[fill= red!90!black!40!]  (28,0.5) rectangle (37,1)
        node[midway] {$f_1$}; 
        \draw[fill=blue!40!]  (3,0) rectangle (9,0.5)
        node[midway] {$f_2$}; 
        \draw[fill=blue!40!]  (11,0) rectangle (17,0.5)
        node[midway] {$f_2$}; 
        \draw[fill=blue!40!] (21,0.5) rectangle (27,1)
        node[midway] {$f_2$}; 
        \draw[fill=green!80!black!40!]  (0,0) rectangle (1,0.5) node[midway] {$f_3$};
        \draw[fill=green!80!black!40!]  (1,0) rectangle (2,0.5) node[midway] {$f_3$};
        \draw[fill=green!80!black!40!]  (18,0) rectangle (19,0.5) node[midway] {$f_3$};
        \draw[fill=green!80!black!40!]  (19,0.5) rectangle (20,1) node[midway] {$f_3$};
      \end{tikzpicture}
    }
    \caption{Two solution examples for PTC.}
    \label{fig:exPTC}
    
  \end{figure}
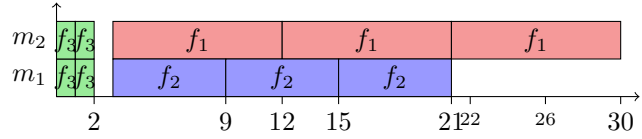
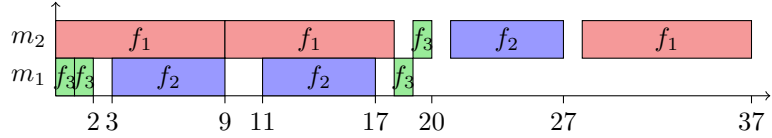
\end{example}

\subsection{CP model}
\label{sec:CP}

In   the    following   section,   the   part   of   the    CP   model
of~\cite{Nattaf2019C}    which       is    useful   for  this  work is
recalled.   This part  corresponds  to a  Parallel Machine  Scheduling
Problem (PMSP) with  family setup times. New  auxiliary variables used
by  our  cost  based  filtering  rules  are  also  introduced.   These
variables are written in bold in the variable description. 
To  model  the  PMSP  with family  setup  times,  (optional)  interval
variables  are  used~\cite{Laborie08,Laborie09}.    To  each  interval
variable $J$,  a start time $st(J)$,  an end time $et(J)$,  a duration
$d(J)$ and  an execution status  $x(J)$ is associated.   The execution
status $x(J)$ is equal to $1$ if and only if $J$ is present in the solution and
$0$ otherwise.
\\
The following set of variables is used:
\begin{compactitem}
\item  $jobs_j,\ \forall  j \in  \N$: Interval  variable modeling  the
  execution of job $j$; 
\item $altJ_{j,m},\  \forall (j,m) \in \N  \times \M_{f(j)}$: Optional
  interval variable  modeling  the assignment of
  job $j$ to machine $m$;
\item   $\mathbf{flowtime_m}$  and   $flowtime$:  Integer   variables  modeling
  respectively the flow time on machine $m$ and the global flow time;
\item  $\mathbf{nbJobs_{f,m}},  \forall  (f,m)  \in \F  \times  \M_f$:  Integer
  variable modeling the number of jobs of family $f$ scheduled on $m$;
\item  $\mathbf{nbJobs_{m}},  \forall  m  \in  \M$:  Integer  variable
  modeling the number of jobs scheduled on $m$.
\end{compactitem}
To  model  the  PMSP  with  setup time,  the  following  sets  of
constraints is used:
\footnotesize 
\begin{align}
  & flowTime = \sum_{j \in \N} et(jobs_j) & & \label{eq:objFT}\\
  &alternative \left(jobs_j, \left\{ altJ_{j,m} | m \in \M_{f(j)} \right\}
    \right)&  & \forall j \in \N \label{eq:alternative}\\
  &noOverlap  \left(  \left\{  altJ_{j,m}   |\forall  j\  s.t.\  m  \in
    \M_{f(j)} \right\}, S \right) 
                                          & &\forall m \in \M \label{eq:noOverlap}\\
  & flowtime = \sum_{m \in \M} flowtime_m & & \label{eq:sumFTm}\\ 
  & flowtime_m  = \sum_{j  \in \N}  et(altJ_{j,m}) &  & \forall  m \in
                                                        \M \label{eq:sumET}\\ 
  & nbJobs_{f,m} = \sum_{j \in \N; f(j) = f} x(altJ_{j,m}) & & \forall 
                                                               (f,m)    \in
                                                               \F
                                                               \times
                                                               \M_f \label{eq:nbJobsfm1}\\
  &  \sum_{m   \in  \M}  nbJobs_{f,m}  =   n_f  &  &  \forall   f  \in
                                                     \F \label{eq:nbJobsfm2}\\ 
  &  \sum_{ f  \in \F}  nbJobs_{f,m} =  nbJobs_{m} &  & \forall  m \in
                                                        \M \label{eq:nbJobsm1}\\ 
  & \sum_{m \in \M} nbJobs_m = N & & \label{eq:nbJobsm2}
\end{align}
\normalsize

Constraint~\eqref{eq:objFT} is used to compute  the flow time of the
schedule. Constraints~\eqref{eq:alternative}--\eqref{eq:noOverlap} are
used     to      model     the     PMSP     with      family     setup
time. Constraints~\eqref{eq:alternative} model  the assignment of jobs
to machine.  Constraints~\eqref{eq:noOverlap} ensure that jobs  do not
overlap and enforce setup times.  Note that $S$ denotes the setup time
matrix:  $(S_{f,f'})$  is equal  to  $0$  if $f  =  f'$  and to  $s_f$
otherwise.  A  complete  description  of {\it  alternative}  and  {\it
  noOverlap} constraints can be found in~\cite{Laborie08,Laborie09}.
\\
In~\cite{Nattaf2019C}, additional  constraints are used to  make the
model    stronger,    e.g.    ordering    constraints,    cumulative
relaxation. They are not presented in  this paper. 
\\
Constraints~\eqref{eq:sumFTm}--\eqref{eq:nbJobsm2}  are used  to link
the  new  variables   to  the  model.  Constraints~\eqref{eq:sumFTm}
and~\eqref{eq:sumET}       ensure       machine      flow       time
computation. Constraints~\eqref{eq:nbJobsfm1} compute  the number of
jobs      of      family      $f$     executed      on      machine
$m$. Constraints~\eqref{eq:nbJobsfm2} make sure  the right number of
jobs  of  family  $f$ is  executed.  Constraints~\eqref{eq:nbJobsm1}
and~\eqref{eq:nbJobsm2}           are          equivalent           to
constraints~\eqref{eq:nbJobsfm1} and~\eqref{eq:nbJobsfm2}  but for the
total number of jobs scheduled on machine $m$.
The bi-objective optimization is  a lexicographical optimization or its
linearization \cite{ehrgott-00}. 

\section{Relaxation description and sequencing} 
\label{sec:relaxation}

\subsection{${\cal R}$-PTC description}

The  relaxation  of  PTC  (${\cal   R}$-PTC)  is  a  parallel  machine
scheduling problem with sequence-independent family setup time without
the qualification constraints (parameter $\gamma_f$). 
The objective is then to minimize the flow time. 
In this section, it is assumed that a total the assignment of jobs to
machines is already done and the  objective is to sequence jobs so the
flow time is  minimal. Therefore, since the sequencing of  jobs on $M$
machines  can  be seen  as  $M$  one  machine problems,  this  section
presents how jobs can be sequenced optimally on one machine.  
In  Section~\ref{sec:filter}, the  cost-based  filtering rules  handle
partial assignments of jobs to the machines. 

\subsection{Optimal sequencing for ${\cal R}$-PTC}

The  results   presented  in   this  section  were  first  described
in~\cite{PMS2020}.
They are adapted from~\citet{Mason1991} who considers an initial setup
at the beginning of the schedule. 
The results are just summarized in this paper.

First, a solution can be represented as a sequence $S$ representing an
ordered   set  of  $n$  jobs.  Considering  job  families  instead  of
individual jobs, $S$ can be seen as  a series of blocks, where a block
is a  maximal consecutive sub-sequence  of jobs  in $S$ from  the same
family (see Figure~\ref{fig:block}).  Let $B_i$ be the $i$-th block of
the sequence,  $S =  \{B_1,B_2,\dots,B_r\}$. Hence,  successive blocks
contain  jobs from  different families.   Therefore, there  will be  a
setup time before each block (except the first one).

\begin{figure}[!htb]
  \begin{center}
    \begin{tikzpicture}[xscale=0.25]
      \node (O) at (-0.5,0) {};
      \draw[->] (O.center) -- ( 38,0);
      
      \draw[fill= red!90!black!40!]  (0,0.5) rectangle (9,0)
      node[midway] {$f_1$}; 
      \draw[fill= red!90!black!40!]  (9,0.5) rectangle (18,0)
      node[midway] {$f_1$}; 
      \draw[fill= red!90!black!40!]  (28,0.5) rectangle (37,0)
      node[midway] {$f_1$}; 
      \draw[fill=blue!40!] (21,0.5) rectangle (27,0)
      node[midway] {$f_2$}; 
      \draw[fill=green!80!black!40!]    (19,0.5)  rectangle   (20,0)
      node[midway] {$f_3$};

      \foreach \i in {0,5,10,...,35}
      { \draw (\i,0) -- (\i,-0.1) node[below] {\scriptsize $\i$}; }

      \begin{scope}[yshift = -1.5cm]
        \node (O) at (-0.5,0) {};
        \draw[->] (O.center) -- ( 38,0);
        
        \draw[fill= red!90!black!40!]  (0,0.5) rectangle (18,0)
        node[midway] {$B_1$};
        \draw[fill= red!90!black!40!]  (28,0.5) rectangle (37,0)
        node[midway] {$B_4$}; 
        \draw[fill=blue!40!] (21,0.5) rectangle (27,0)
        node[midway] {$B_3$}; 
        \draw[fill=green!80!black!40!]    (19,0.5)  rectangle   (20,0)
        node[midway] {$B_2$};

        \foreach \i in {0,5,10,...,35}
        { \draw (\i,0) -- (\i,-0.1) node[below] {\scriptsize $\i$}; }
      \end{scope}
    \end{tikzpicture}
    \caption{Block representation of a solution.}
    \label{fig:block}
  \end{center}
\end{figure}
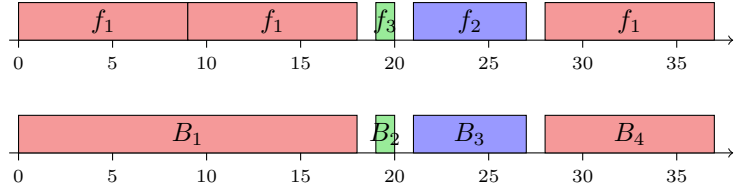

The idea of the algorithm is to adapt the Shortest Processing Time ($SPT$) 
rule~\cite{Smith} for blocks instead of individual jobs.  To this end,
blocks are  considered as individual  jobs with processing  time $P_i=
s_{f_i} +  |B_i|\cdot p_{f_i}  $ and weight  $W_i= |B_i|$  where $f_i$
denotes the family of jobs in $B_i$ (which is the same for all jobs in
$B_i$).

The first theorem  of this section states that there  always exists an
optimal solution  $S$ containing exactly  $|\F|$ blocks and  that each
block $B_i$ contains  all jobs of the family $f_i$.  That is, all jobs
of a family are scheduled consecutively. 

\begin{theorem}
  \label{th4}
  Let ${\cal I}$ be an instance of the problem. There exists an
  optimal  solution $S^*=  \{B_1,\dots, B_{|\F|}\}$  such that  $|B_i| =
  n_{f_i}$ where $f_i$ is the family of jobs in $B_i$.
\end{theorem}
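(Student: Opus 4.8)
The plan is to argue by a local exchange on the block decomposition of an optimal single-machine sequence. Suppose, for contradiction, that no optimal solution has every family grouped into a single block, and take an optimal $S$ minimizing the total number of blocks. Then some family $f$ is split, so $S$ contains two blocks of $f$ that are consecutive among the $f$-blocks; write $S = [\,\text{prefix}\,]\,B\,X\,B'\,[\,\text{suffix}\,]$, where $B$ (resp.\ $B'$) is a block of $f$ with $a=|B|$ (resp.\ $b=|B'|$) jobs, and $X$ is the nonempty subsequence of jobs of other families lying between them. Put $p=p_f$, $s=s_f$, let $w$ be the number of jobs in $X$ and $L$ its total duration including the setup preceding its first block. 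I would consider two merging operations: Op~1 moves $B'$ to sit immediately after $B$ (giving $B\,B'\,X$), and Op~2 moves $X$ before $B$ (giving $X\,B\,B'$). In both, $B$ and $B'$ fuse into one $f$-block, so exactly one setup $s$ disappears; the key point exploited throughout is that setups are sequence-independent, so a setup of value $s_g$ is present before a block of family $g$ precisely when its predecessor is not in $g$.

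The second step is to bound the flow-time variations. A direct accounting of the shifted completion times gives, after collecting into a common non-positive term $C\le 0$ all the savings due to the suppressed setup (which pulls the suffix earlier), the upper bounds $\Delta_1 \le w\,b\,p - b\,L$ for Op~1 and $\Delta_2 \le a\,L - w\,(a\,p+s)$ for Op~2. Weighting these by $a$ and $b$ yields the identity
\[
a\bigl(w\,b\,p - b\,L\bigr) + b\bigl(a\,L - w\,a\,p - w\,s\bigr) = -\,b\,w\,s \;\le\; 0 ,
\]
so at least one of $\Delta_1,\Delta_2$ is non-positive. Hence one of the two operations produces a feasible sequence whose flow time does not exceed that of $S$, while it strictly decreases the number of $f$-blocks, and therefore the total number of blocks.

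This contradicts the minimality of $S$, which proves that in some optimal solution every family forms a single block; since each of the $|\F|$ families contributes $n_f\ge 1$ jobs, the resulting sequence has exactly $|\F|$ blocks with $|B_i|=n_{f_i}$, as claimed. The step I expect to be the main obstacle is the setup bookkeeping at the two outer boundaries: moving $X$ (or $B'$) can make two different-family blocks adjacent at the prefix or suffix junction, and if those happen to share a family they fuse as well. I would handle this by observing that such incidental fusions only suppress further setups and never create new ones---equivalently, none of the operations increases any family's block count---so the total setup time is non-increasing and the displayed upper bounds on $\Delta_1$ and $\Delta_2$ remain valid, keeping the argument intact.
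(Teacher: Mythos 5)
Your route is genuinely different from the paper's: the paper moves a \emph{single} job (the first job of the later block $B_v$) to the end of the earlier block $B_u$ and closes the argument with an external lemma stating $P/W \ge p_f$ for the intermediate segment of an optimal sequence, whereas you move whole segments and replace that lemma by the weighted-average identity $a\Delta_1^{\mathrm{bd}} + b\Delta_2^{\mathrm{bd}} = -bws \le 0$, which makes the proof self-contained. For the case where the prefix is nonempty your accounting is correct (indeed $\Delta_1 \le wbp - b(L+s)$, tighter than what you state, and $\Delta_2 \le aL - w(ap+s)$ as claimed), and the minimal-block-count contradiction goes through.

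There is, however, a genuine gap exactly at the feature that makes this problem nonstandard: the absence of a setup at time $0$. If the prefix is empty, i.e.\ $B$ is the very first block of the schedule, then in $S$ the block $B$ carries no setup, but after Op~2 it follows the last block of $X$ (whose family is not $f$ by maximality of $B'$), so a \emph{new} setup of value $s$ appears in front of $B$. Your safeguard --- ``no operation increases any family's block count, hence total setup time is non-increasing, hence the bounds remain valid'' --- does not cover this: the total setup time is indeed non-increasing (the setup before $B'$ disappears and the leading setup of $X$ vanishes at time $0$), but a setup has been \emph{relocated earlier} in the sequence, where it delays $B$, $B'$ and the whole suffix. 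Concretely, with $s_g$ the leading setup of $X$, the correct bound becomes $\Delta_2 \le a(L - s_g + s) - w(ap + s_g) \le a(L+s) - wap$, which exceeds your stated $aL - w(ap+s)$ by $(a+w)s$; taking $a=1$, $s_g=0$, no suffix and $L = w(p+s)$ gives $\Delta_2 = (w+1)s > 0$ while your bound equals $0$, so your rule would select an operation that strictly worsens the flow time. The theorem is not endangered and your strategy survives: in the empty-prefix case one has $\Delta_1 \le b(wp - L - s)$ and $\Delta_2 \le a(L + s - wp)$, whose $a,b$-weighted sum is $0$, so the dichotomy still holds --- but you must carry the $-bs$ term in $\Delta_1$ and redo the Op~2 bookkeeping for the first block rather than appeal to the non-increase of total setup time.
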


\begin{proof}[Sketch]
  For a  complete proof  of the theorem, see~\cite{PMS2020}. \\
  Consider an  optimal solution  $S=  \{ B_1,  \dots, B_u, \dots, B_v,
  \dots, B_r\}$ with two blocks $B_u$ and $B_v$ ($u<v$), containing jobs
  of the same family $f_u = f_v = f $. Then, moving the first job of
  $B_v$ at the end of block $B_u$ can only improve the solution.

  Indeed, let us define $P$ and $W$ as: $P = \sum_{i = u+1}^{v-1} P_i + s_f$
  and $W =  \sum_{i=u+1}^{v-1} |B_i|$. In addition, let  $S'$ be the
  sequence formed by moving the first job of $B_v$, say 
  job $j_v$,  at the end of  block $B_u$.  The difference  on the flow
  time between $S$ and $S'$, is as follows: 

  $FT_{S'} - FT_S = \left\{
    \begin{array}{lcl}
      W\cdot p_f - P & & \text{ if } |B_v| = 1\\
      W\cdot p_f - P - \sum_{i = v+1}^r |B_i| \cdot s_f & 
                       & \text{ if }  |B_v|  > 1\\
    \end{array}
  \right.
  $

  \medskip

  Using Lemma 1 of~\cite{PMS2020} stating that $P/W \ge p_f$, then
  $FT_{S'} -  FT_{S} < 0  $ and the flow  time is improved  in $S'$.
  Hence, moving  the first job  of $B_v$ at  the end of  block $B_u$
  leads to a solution $S'$ at least as good as $S$.
  
  Therefore,  a   block  $B_i$   contains   all   jobs  of   family
  $f_i$. Indeed, if not, applying  the previous operation leads to a
  better solution.  Hence, $|B_i |  = n_{f_i}$ and there are exactly
  $F$ blocks in the optimal solution, i.e. one block per family. 
\end{proof}
At this point, the number of  block and theirs contents are defined.
The next step is to order them. To this end, the concept of weighted
processing time is also adapted to blocks as follows.

\begin{definition}
  The  Mean Processing  Time  (MPT) of  a block  $B_i$  is defined  as
  $MPT(B_i) = P_i/W_i$.
\end{definition}

One  may think  that,  in an  optimal solution,  jobs  are ordered  by
$SMPT$ (Shortest Mean  Processing Time).  However, this  is not always
true  since  no setup  time  is  required  at  time $0$.  Indeed,  the
definition  of block  processing time  always considers  a
setup time  before the block.  In our case, this  is not true  for the
first block.  Example~\ref{ex:CEMason} gives a  counter-example showing
that scheduling all blocks according to the SMPT rule is not optimal.
  
  \begin{example}[Counter-example -- Figure~\ref{fig:CEMason}]
    \label{ex:CEMason}
    Consider the instance given by Table~\ref{subfig:CEMasonInst} that
    also gives  the MPT  of each family  ($MPT_f = p_f  + (s_f  \div \
    n_f)$).  Figure~\ref{subfig:CEMasonSeq} shows  the  SMPT rule  may
    lead to sub-optimal  solutions  when  no  setup  time  is  required  at  time
  $0$. Indeed, following the SMPT rule,  jobs of family $1$ have to be
  scheduled before  jobs of family $2$.  This leads to a  flow time of
  $198$. However,  schedule jobs of  family $2$ before jobs  of family
  $1$ leads to a better flow time, i.e. $181$.
  
  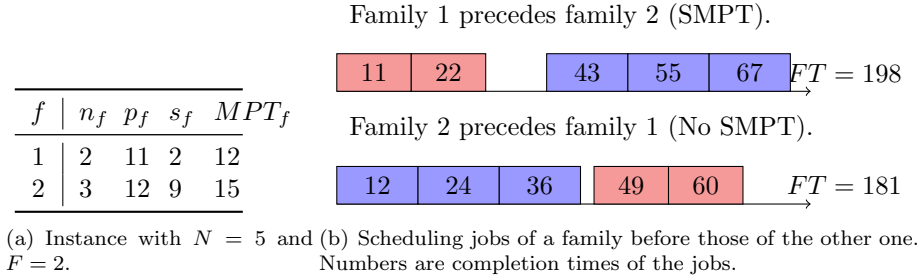
\begin{figure}[tp]
    \subfigure[Instance with $N=5$ and $F=2$.\label{subfig:CEMasonInst}]{
      \begin{tabularx}{0.25\linewidth}[b]{c|XXXX}
      \toprule
      $f$ & $n_f$ & $p_f$ & $s_f$ & $MPT_f$ \\
      \midrule
      1 & 2 & 11 & 2 & 12 \\
      2 & 3 & 12 & 9 & 15 \\
      \bottomrule
      \end{tabularx}
      \hspace{20pt}
    }
       \subfigure[Scheduling  jobs of  a  family before  those of  the
       other   one.    Numbers   are    completion   times    of   the
       jobs.\label{subfig:CEMasonSeq}]{ 

      \begin{tikzpicture}[xscale=0.09,yscale=0.5]
        \node (O) at (0,0) {};
        \draw[->] (O.center) -- (70,0);
        \node[right] at (0.5,2) {Family 1 precedes family 2 (SMPT).};
        \node at (75,0.5) {$FT = 198$};
        
        \draw[fill= red!90!black!40] (0,0) rectangle (11,1) node[midway] {$11$};
        \draw[fill= red!90!black!40] (11,0) rectangle (22,1) node[midway] {$22$};
        
        \draw[fill=blue!40] (31,0) rectangle (43,1) node[midway] {$43$};
        \draw[fill=blue!40] (43,0) rectangle (55,1) node[midway] {$55$};
        \draw[fill=blue!40] (55,0) rectangle (67,1) node[midway] {$67$};
        
        \begin{scope}[yshift = -1cm]
          \node (1) at (0,-2) {};
          \draw[->] (1.center) -- (70,-2);
          \node at (75,-1.5) {$FT = 181$};
          \node[right] at (0.5,0) {Family 2 precedes family 1 (No SMPT).};
          \draw[fill=blue!40] (0,-2) rectangle (12,-1) node[midway] {$12$};
          \draw[fill=blue!40] (12,-2) rectangle (24,-1) node[midway] {$24$};
          \draw[fill=blue!40] (24,-2) rectangle (36,-1) node[midway] {$36$};
          
          \draw[fill= red!90!black!40] (38,-2) rectangle (49,-1) node[midway] {$49$};
          \draw[fill= red!90!black!40] (49,-2) rectangle (60,-1) node[midway] {$60$};
        \end{scope}
      \end{tikzpicture}
    }
      \caption{Scheduling all blocks according to the SMPT rule is not optimal.}
       
      \label{fig:CEMason}
  \end{figure}
\end{example}

Actually, the  only reason why the  $SMPT$ rule does not  lead to an
optimal  solution  is  that  no  setup  time  is  required  at  time
$0$.  Therefore, in an optimal solution, all blocks except the first 
one  are  scheduled  according  to  the $SMPT$  rule.  That  is  the
statement  of   Theorem~\ref{th2}  for   which  a  proof   is  given
in~\cite{PMS2020}.

\begin{theorem}
  \label{th2}
  In an optimal sequence of the  problem, the blocks $2$ to $|\F|$ are
  ordered by  $SMPT$ (Shortest Mean  Processing Time).  That  is, if
  $1< i<j$ then $MPT(B_i) \le MPT(B_j)$.
\end{theorem}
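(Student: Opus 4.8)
The plan is to prove the statement by the classical adjacent pairwise interchange argument, i.e. the block analogue of Smith's $SMPT$ rule. By Theorem~\ref{th4} I may restrict attention to an optimal sequence $S^* = \{B_1,\dots,B_{|\F|}\}$ with exactly one block per family, and it suffices to show that any two \emph{consecutive} blocks $B_i,B_{i+1}$ with $i\ge 2$ satisfy $MPT(B_i)\le MPT(B_{i+1})$; the full ordering of blocks $2$ to $|\F|$ then follows by transitivity. So I would assume, for contradiction, that some optimal sequence has $MPT(B_i) > MPT(B_{i+1})$ for an index $i\ge 2$, and swap these two blocks to produce a sequence $S'$ that I will show is strictly better.

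First I would argue that the exchange affects only the jobs of $B_i$ and $B_{i+1}$. The jobs scheduled before $B_i$ are clearly untouched. For the jobs scheduled after $B_{i+1}$, the decisive point --- and the single place where the hypotheses really enter --- is that setup times are sequence-independent: swapping the two blocks leaves each of $P_i = s_{f_i} + |B_i|\,p_{f_i}$ and $P_{i+1} = s_{f_{i+1}} + |B_{i+1}|\,p_{f_{i+1}}$ unchanged, and in particular preserves the total length $P_i + P_{i+1}$ of the two-block window (both blocks sit in position $\ge 2$, so each still carries a setup after the swap). Hence every job after the window keeps its completion time, and only the contributions of $B_i$ and $B_{i+1}$ to the flow time can change.

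Next I would compute that change. If the block preceding the window completes at time $C$, then a block $B$ of family $f$ placed first in the window has its jobs completing at $C + s_f + \ell\,p_f$ for $\ell = 1,\dots,|B|$, contributing $|B|\cdot C + I_B$ to the flow time, where $I_B = \sum_{\ell=1}^{|B|}(s_f + \ell\,p_f)$ is independent of the block's position; the block placed second sees base time $C$ increased by the length of the first block. Writing $W_i = |B_i|$, summing the two contributions in each order, and observing that the position-independent terms $I_{B_i}$ and $I_{B_{i+1}}$ cancel, the flow-time difference collapses to a single clean expression:
\[
FT_{S'} - FT_{S^*} = W_i\,P_{i+1} - W_{i+1}\,P_i.
\]
Since $MPT(B_i) > MPT(B_{i+1})$ means $P_i/W_i > P_{i+1}/W_{i+1}$, equivalently $W_{i+1}P_i > W_i P_{i+1}$, this difference is strictly negative, so $S'$ improves on $S^*$, contradicting optimality. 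This establishes $MPT(B_i)\le MPT(B_{i+1})$ for every $i\ge 2$, hence the claimed $SMPT$ order.

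The only delicate part is the bookkeeping of the second paragraph: one must be certain that nothing outside the two swapped blocks shifts in time. This is exactly where sequence-independence of the setups is used, and it is also the reason the first block must be excluded from the statement --- $B_1$ carries no setup, so exchanging it with $B_2$ would alter $P_1$ (replacing $n_{f_1}p_{f_1}$ by $s_{f_1}+n_{f_1}p_{f_1}$) and break the clean cancellation. Everything else reduces to the routine weighted-completion-time computation sketched above.
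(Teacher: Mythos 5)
Your proof is correct: the adjacent pairwise interchange of blocks $i$ and $i+1$ (for $i\ge 2$), the observation that sequence-independent setups keep the window length $P_i+P_{i+1}$ and hence all downstream completion times fixed, and the resulting difference $W_iP_{i+1}-W_{i+1}P_i$ are exactly the block-level Smith's-rule argument, and your remark on why $B_1$ must be excluded is the right one. The paper itself does not prove Theorem~\ref{th2} (it defers to the cited reference, which adapts Mason's SWPT result), but your argument is the standard interchange proof that reference relies on, so this matches the intended approach.
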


The remaining of this section explains how these results are used to
define a polynomial time algorithm for sequencing jobs on a machine so
the flow time is minimized. This algorithm is called~\Seq in
the remaining of the paper.

Theorem~\ref{th4} states  that there exists an  optimal solution $S$
containing exactly  $|\F|$ blocks and  that each block  $B_i$ contains
all jobs  of family $f_i$.   Theorem~\ref{th2} states that  the blocks
$B_2$ to $B_{|\F|}$ are ordered by $SMPT$.  Finally, one only needs to
determine which family is processed in the very first block.

Algorithm \Seq takes as input the  set of jobs and starts by
grouping  them  in  blocks  and  sorting them  in  SMPT  order.   The
algorithm then computes the flow time of this schedule.  Each
block  is  then   successively  moved  to  the   first  position  (see
Figure~\ref{fig:moveFirst})  and the  new flow  time is  computed. The
solution returned by the algorithm  is therefore the one achieving the
best flow time.

\begin{figure}[htp]
  \begin{center}
    \subfigure[SMPT Sequence]{
      \begin{tikzpicture}[xscale=0.3,yscale=0.5]
        \node (O) at (0.5,0) {};
        \draw[->] (O.center) -- (21.5,0);

        \draw[fill=gray!30]   (0,0)  rectangle   (7,1)  node[midway]
        {$B_1, \dots, B_{f-1}$}; 
        \draw (7,0) rectangle (9,1) node[midway] {$s_f$};
        \draw[fill=blue!40]  (9,0)   rectangle  (12,1)  node[midway]
        {$B_{f}$}; 
        \draw (12,0) rectangle (14,1) node[midway] {$s_{f+1}$};
        \draw[fill=gray!30]  (14,0)  rectangle  (21,1)  node[midway]
        {$B_{f+1}, \dots, B_{|\F|}$}; 
      \end{tikzpicture}            }
    \subfigure[$B_f$ is moved in the first position.]{
      \begin{tikzpicture}[xscale=0.3,yscale=0.5]
        \node (O) at (0.5,0) {};
        \draw[->] (O.center) -- (21.5,0);
        
        \draw[fill=blue!40]   (0,0)  rectangle   (3,1)  node[midway]
        {$B_{f}$}; 
        \draw (3,0) rectangle (5,1) node[midway] {$s_1$};
        \draw[fill=gray!30]  (5,0)   rectangle  (12,1)  node[midway]
        {$B_1, \dots, B_{f-1}$}; 
        \draw (12,0) rectangle (14,1) node[midway] {$s_{f+1}$};
        \draw[fill=gray!30]  (14,0)  rectangle  (21,1)  node[midway]
        {$B_{f+1}, \dots, B_{|\F|}$}; 
      \end{tikzpicture}            
    }
    \caption{SMPT Sequence and Move Operation.}
    \label{fig:moveFirst}
  \end{center}
\end{figure}
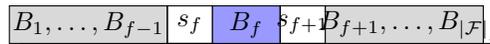
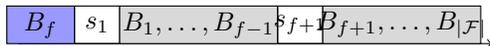

The  complexity  for   ordering  the  families  in   $SMPT$  order  is
$O(F\log{F})$.  The complexity of moving each block to the first 
position   and    computing   the    corresponding   flow    time   is
$O(F)$.  Indeed, there  is no  need to  re-compute the  entire flow
time.   The  difference  of  flow  time  coming  from  the  {\it  Move
  operation} can be computed in $O(1)$.  Hence, the complexity of
\Seq is $O(F \log{F})$.

\section{Filtering rules and algorithms}
\label{sec:filter}
This section is  dedicated to the cost-based filtering rules and algorithms derived from the
results of Section~\ref{sec:relaxation}.  They are
separated into three parts, each one corresponding to the filtering of one
variable: $flowtime_m$  (Section~\ref{sec:filtFT});  $nbJobs_{f,m}$
(Section~\ref{sec:filt_nJfm});               $nbJobs_m$
(Section~\ref{sec:nbjobs}).   Note  that   the   two  last   variables
constrained by the sum constraint~\ref{eq:nbJobsm1}. 

During the solving  of the problem, jobs are divided  for each machine
$m$   into  three   categories   based  on   the  interval   variables
$altJ_{j,m}$: each  job either is, or  can be, or is  not, assigned to
the machine. Note that the time  windows of the interval variables are
not considered in the relaxation. 
For a  machine $m$,  let $\setARE$ be  the set of  jobs for  which the
assignment  to machine  $m$  is  decided.  In  the  following of  this
section, an instance $\cal I$ is always considered with the set $\N^A=
\cup_{m \in \M} \setARE$ of jobs  already assigned to a machine. Thus,
an instance is denoted by $({\cal I}, \N^A )$.

Some  notations   are  now introduced.    For  a   variable  $x$,
$\underline{x}$ (resp. $\overline{x}$) represents the lower (resp. the 
upper) bound on  the variable $x$. Furthermore, let  $FT^*({\cal X})$ be
the flow time of the solution returned by the algorithm \Seq
applied on the set of jobs ${\cal X}$.

\subsection{Increasing the machine flow time}
\label{sec:filtFT}

The first  rule updates the  lower bound on the  $flowtime_m$ variable
and follows directly from Section~\ref{sec:relaxation}. The complexity
of this rule is $O(M \cdot F \cdot \log{F})$. 
\begin{filter}
  \label{filt:FT}
  $\forall m \in \M, \ flowtime_m \ge FT^*(\setARE) $
\end{filter}

\begin{proof}
  It is  sufficient to  notice that, for  a machine  $m$, \Seq
  gives a lower bound on the flow time. In particular, $FT^*(\setARE)$
  is a lower  bound on the flow  time of $m$ for  the instance $({\cal
    I}, \N^A)$.
\end{proof}




\begin{example}
  \label{ex:filtFT}
  Consider an instance with $3$ families. Their parameters are given
  by Table~\ref{tab:exdata}.  A specific  machine $m$ is  considered and
  set $\setARE$ is composed of one job of each family.  This instance
  and $\setARE$ is  used in all the example of this section.
\\
  Suppose $flowtime_m \in [  0 , 35]$. The output of  \Seq is given on
the top of  Figure~\ref{fig:seq1}.  Thus, the lower bound  on the flow
time can be updated to $2+7+13 = 22$, i.e. $flowtime_m\in [22, 35]$.
\\
Suppose now  that an extra job  of family $f_2$ is  added to $\setARE$
(on  the  bottom   of  Figure~\ref{fig:seq1}).  Thus,  $FT^*(\setARE)=
2+8+11+16=37$   and  $37   >   \overline{flowtime}_m=35$.  Thus,   the
assignment defined by $\setARE$ is infeasible.

\begin{figure}[tbp]
  \begin{center}
    \subfigure[Instance data.\label{tab:exdata}]{
    \begin{tabularx}{0.35\linewidth}[b]{c|CCC}
      \toprule
      $f$ & $n_f$ & $p_f$ & $s_f$ \nl
      \midrule
      1&3&2&5\nl
      2&3&3&3\nl
      3&4&4&1\nl
      \bottomrule
    \end{tabularx}
  }
    \subfigure[Illustration of \refR{FT} and \refR{nJfm}.\label{fig:seq1}]{
      \begin{minipage}[b]{0.6\linewidth}
        \begin{tikzpicture}[yscale = 0.5, xscale=0.35]
          \node (O) at (-0.5,0) {};
          \draw[->] (O.center) -- ( 14,0);
          \draw[fill =  red!90!black!40]  (0,0) rectangle (2,1) node[midway] {$f_1$};
          \draw[fill = green!80!black!40!]  (3,0) rectangle (7,1) node[midway] {$f_3$};
          \draw[fill = blue!40] (10,0) rectangle (13,1) node[midway] {$f_2$};
          
          \foreach \i in {0,1,2,...,13}
          { \draw (\i,0) -- (\i,-0.15);}
          \foreach \i in {0,5,10}
          { \draw (\i,0) -- (\i,-0.15) node[below] {\scriptsize \i};}
        \end{tikzpicture}
        \vspace{5pt}
        \begin{tikzpicture}[yscale = 0.5,xscale=0.35] 
          \node (O) at (-0.5,0) {};
          \draw[->] (O.center) -- ( 17,0);
          \draw[fill =  red!90!black!40]  (0,0) rectangle (2,1) node[midway] {$f_1$};
          \draw[fill = green!80!black!40!]  (12,0) rectangle (16,1) node[midway] {$f_3$};
          \draw[fill = blue!40] (5,0) rectangle (8,1) node[midway] {$f_2$};
          \draw[fill = blue!40] (8,0) rectangle (11,1) node[midway] {$f_2$};
          
          \foreach \i in {0,1,2,...,16}
          { \draw (\i,0) -- (\i,-0.15);}
          \foreach \i in {0,5,10,15}
          { \draw (\i,0) -- (\i,-0.15) node[below] {\scriptsize \i};}
          
        \end{tikzpicture}
      \end{minipage}
      }
      \caption{Illustration of $flowtime_m$ filtering (\refR{FT}).}
    \end{center}
  \end{figure}

          
          
          

\end{example}

Another rule can be  defined to filter $flowtime_m$.  This
rule is stronger than Rule~\ref{filt:FT} and is based on $\setARE$ and
$\underline{nbJobs}_m$.   Indeed,  $\underline{nbJobs}_m$  denotes the
minimum number of jobs  that has to be assigned to  $m$.  Thus, if one
can  find  the  $\underline{nbJobs}_m$  jobs  (including  the  one  of
$\setARE$) that leads  to the minimum flow time, it  will give a lower
bound on the flow time of machine $m$.
\\
Actually,  it  may  be  difficult  to  know  exactly  which  jobs  will
contribute the least to the flow time.  However, considering jobs in
$SPT$  order and  with $0$  setup time  gives a  valid lower  bound on
$flowtime_m$.  First, an example illustrating the filtering rule is
presented and then, the rule is formally given. 
\begin{example}
Consider  the instance  described in  Example~\ref{ex:filtFT}. Suppose
$\setARE$ is  composed of one job  of each family and  $flowtime_m \in
[22,60]$. Suppose also $\underline{nbJobs}_m 
= 6$.  Thus, $3$ extra jobs need to be assigned to $m$.
\\
Families in $SPT$ order are $\{f_1, f_2,f_3\}$ and the remaining number
of jobs in each family is $2,2,3$. Hence, the $3$ extra jobs are: $2$
jobs of $f_1$ and $1$ job of $f_2$.
\\
To make sure the lower bound on the flow time is valid, those jobs are
sequenced on  $m$ with no  setup time. In  Figure~\ref{fig:extFT}, $f_j$
denotes ``classical'' jobs of family $f_j$ while $f_j'$ denotes jobs of
family $f_j$ with no setup time. 
\\
Figure~\ref{fig:extFT} shows the results of \Seq on the set of
jobs   composed    of   $\setARE$    plus   the   $3$    extra   jobs,
i.e. $\underline{nbJobs}_m=6$.  Here,  $FT^* = 2+4+6+9+14+20=55$. Thus,
the lower  bound on  $flowtime_m$ can be  updated and  $flowtime_m \in
[55,60]$.

\vspace{0.3cm}
  \begin{center}
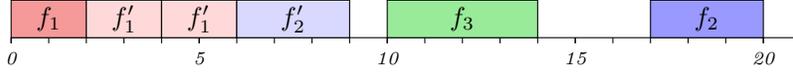

    \begin{tikzpicture}[scale=0.5]
      \node (O) at (0,0) {};
      \draw[->] (O.center) -- ( 21,0);
      \draw[fill = red!90!black!40]  (0,0) rectangle (2,1) node[midway] {$f_1$};
      \draw[fill = red!15] (2,0) rectangle (4,1) node[midway] {$f_1'$};
      \draw[fill = red!15]  (4,0) rectangle (6,1) node[midway] {$f_1'$};
      \draw[fill = blue!15]  (6,0) rectangle (9,1) node[midway] {$f_2'$};
      \draw[fill = green!80!black!40]  (10,0) rectangle (14,1) node[midway] {$f_3$};
      \draw[fill =  blue!40] (17,0) rectangle  (20,1) node[midway]
      {$f_2$};
      \foreach \i in {0,1,2,...,20}
      { \draw (\i,0) -- (\i,-0.15);}
      \foreach \i in {0,5,10,15,20}
      { \draw (\i,0) -- (\i,-0.15) node[below] {\scriptsize \i};}
    \end{tikzpicture}
  \end{center}
  \captionof{figure}{Illustration of $flowtime_m$ filtering (Rule~\ref{filt:extFT}).}
  \label{fig:extFT}

\vspace{0.5cm}

Note that, because Rule~\ref{filt:FT} does not take $\underline{nbJobs}_m$ into
account, it gives a lower bound of $22$ in this case.
\end{example}

Let   $\setExtFT$   denotes   the    set   composed   of   the   first
$\underline{nbJobs}_m - |\setARE|$ remaining  jobs in $SPT$ order with
setup time equal to $0$.

\begin{filter}
  \label{filt:extFT}
  $\forall m \in \M,\ flowtime_m \ge FT^*(\setARE \cup \setExtFT)$
\end{filter}

\begin{proof}
  First  note that  if  $ \setExtFT  = \emptyset$,  Rule~\ref{filt:FT}
  gives the result. Thus, suppose that $|\setExtFT| \ge 1$.
  By contradiction, suppose  $\exists m \in \M$ such  that $flowtime_m <
  FT^*(\setARE  \cup \setExtFT)$.  Thus, there  exists another  set of
  jobs $\setProofEFT$ with:
  \begin{equation}
    FT^*(\setARE \cup \setProofEFT) < FT^*(\setARE \cup \setExtFT) \label{eq:proofEFT}
  \end{equation}
   First, note that w.l.o.g. $|\setProofEFT| = |\setExtFT|$. Indeed, if
   $|\setProofEFT|   >   |\setExtFT|$,   we  can   remove   jobs   from
   $|\setProofEFT|$  without increasing  the  flow time.  Furthermore,
   w.l.o.g. we can consider that $\forall j \in \setProofEFT, \
  s_{f_j}=0$. Indeed, since setup times can only increase the flow time,
  thus inequality~\eqref{eq:proofEFT} is still verified.
  \\
  Let       $\overline{S}=       \{      \overline{j}_1,       \cdots,
  \overline{j}_{nbJobs_m} \}$ be the sequence returned by the \Seq
  algorithm on $\setARE \cup \setProofEFT$. Let also $\overline{j}_i$ be
  the  job  in  $\setProofEFT   \setminus  \setExtFT$  with  the  $SPT$.
  Finally,  let $j_k$  be the  job  with the  $SPT$ in  $ \in  \setExtFT
  \setminus  \setProofEFT$.    Thus,  since   $p_{f_{\overline{j}_i}}  >
  p_{f_{j_k}}$,  sequence $\overline{S}'  =  \{ \overline{j}_1,  \cdots,
  \overline{j}_{i-1},      j_k     ,      \overline{j}_{i+1},     \cdots
  \overline{j}_{nbJobs_m}   \}$   has   a   smaller   flow   time   than
  $\overline{S}$.

  Repeated  applications of  this operation  yield to  a contradiction
  with equation~\eqref{eq:proofEFT}.
\end{proof}

The complexity of Rule~\ref{filt:extFT} is $O ( M \cdot  F \cdot
\log{F})$. Indeed,  sorting families  in $SPT$ order  can be  done in
$O(F \cdot log{F})$. Creating the set $\setExtFT$ is done in $O(F)$ and
\Seq is applied in $O(F \cdot log{F})$ which gives a total complexity
of $O(F\cdot log{F} + M \cdot (F + F\cdot \log{F}))$.

\subsection{Reducing the maximum number of jobs of a family}
\label{sec:filt_nJfm}

The  idea of  the  filtering  rule presented  in  this  section is  as
follows.  For  a  family  $f$,  $\overline{nbJobs}_{f,m}$  define  the
maximum number  of jobs of  family $f$ that  can be scheduled  on $m$.
Thus, if adding those  $\overline{nbJobs}_{f,m}$ to $\setARE$ leads to
an  infeasibility, $\overline{nbJobs}_{f,m}$  can be  decreased by  at
least   $1$.   Let   denote  by   $\setNJfm$  the   set  composed   of
$\overline{nbJobs}_{f,m}$  jobs  of  family $f$  minus  those  already
present in $\setARE$.

\begin{filter}
  \label{filt:nJfm}
  If  $\exists(f,m) \in  \F \times  \M$ such  that $FT^*(\setARE  \cup
  \setNJfm)   >   \overline{flowtime}_m$,   then   $nbJobs_{f,m}   \le
  \overline{nbJobs}_{f,m} - 1 $ 
\end{filter}

\begin{proof}
  Suppose that  for a family  $f$ and a machine  $m$, we have  a valid
  assignment such that $ FT^*(\setARE \cup \setNJfm)
  >      \overline{flowtime}_m$     and      $     nbJobs_{f,m}      =
  \overline{nbJobs}_{f,m}$.  By Rule~\ref{filt:FT}, the assignment
  is infeasible which is a contradiction.  
\end{proof}

\begin{example}
Let us  consider the  instance defined by  example~\ref{ex:filtFT}. In
the first  part of this example,  $\setARE$ is composed of  one job of
each   family    and   $flowtime_m   \in   [22,35]$.    Suppose   that
$\overline{nbJobs}_{f_2,m}  =   2$.  Thus,  $\setARE   \cup  \setNJfm$
contains one  job of family  $f_1$ and $f_3$  and two jobs  of family
$f_2$.  The bottom part of figure~\ref{fig:seq1} shows that $ FT^*(\setARE \cup
\setNJfm)    =   37    >    \overline{flowtime}_m   =   35$.    Thus,
$\overline{nbJobs}_{f_2,m} < 2$. 
\end{example}

The complexity of Rule~\ref{filt:nJfm} is $O ( M \cdot F \cdot \log{F}
+ M \cdot F^2 \cdot \log |{\cal D}_{nbJobs_{f,m}}^{MAX}|)$ with $|{\cal
D}_{nbJobs_{f,m}}^{MAX}|$ the maximum size  of the domain of variables
$nbJobs_{f,m}$. Indeed,  the first  part corresponds to  the complexity
for applying the \Seq algorithm on all machine. This algorithm
only  need to  be applied  once since  each time  we remove  jobs from
$\setNJfm$,  $FT^*$  can  be  updated in  $O(F)$.   Indeed,  only  the
position of the family in the sequence has to be updated. Thus, the
second part corresponds to the updating  of $FT^*$ for each family and
each machine. By  proceeding by dichotomy, this update has  to be done
at  most  $\log  |{\cal  D}_{nbJobs_{f,m}}^{MAX}|$  times.  Thus,  the
complexity of Rule~\ref{filt:nJfm} is $O (  M \cdot F \cdot ( \log{F}+
F \cdot \log |{\cal D}_{nbJobs_{f,m}}^{MAX}|))$.

\subsection{Reducing the maximum number of jobs on a machine}
\label{sec:nbjobs}

The  idea  behind Rule~\ref{filt:extFT}  can  be  used to  reduce  the
maximum number  of jobs on  machine $m$.   Indeed, for a  machine $m$,
$\overline{nbJobs}_m$  is  the maximum  number  of  jobs that  can  be
scheduled  to   $m$.  Thus,  if   it  is  not  possible   to  schedule
$\overline{nbJobs}_m$  on $m$  without exceeding  the flow  time, then
$\overline{nbJobs}_m$ can be decreased.

The extra jobs that will be assigned on $m$ must be decided. Note that
those  jobs   must  give  a   lower  bound   on  the  flow   time  for
$\overline{nbJobs}_m$ jobs with the pre-assignment defined by 
$\setARE$.  Thus, jobs can be considered  in $SPT$ order with no setup
time.  Before  giving   the  exact  filtering  rule,   an  example  is
described. 

\begin{example}
Consider  the instance  described in  Example~\ref{ex:filtFT}. In  the
first part of  this example, $\setARE$  is composed of one job of each
family and  $flowtime_m \in  [22,60]$. Suppose  $\overline{nbJobs}_m =
7$.  Thus, the $4$ extra jobs assigned  to $m$ are: $2$ jobs of $f_1$
and  $2$ jobs  of  $f_2$. Figure~\ref{fig:nJm}  shows  the results  of
\Seq on  the set of  jobs composed  of $\setARE$ plus  the $4$
extra jobs.   Here, $FT^*  = 2+4+6+11+14+17+23 =  77$ which  is greater
than $\overline{flowtime}_m=60$. Thus,  $\overline{nbJobs}_m$ cannot be
equal to $7$ and can be filtered. 

\vspace{0.3cm}
\begin{center}
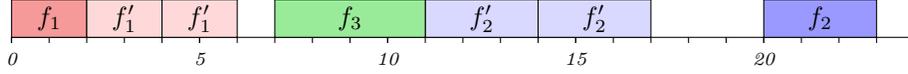

      \begin{tikzpicture}[scale=0.5]
        \node (O) at (0,0) {};
        \draw[->] (O.center) -- ( 24,0);
        \draw[fill =red!90!black!40]  (0,0) rectangle (2,1) node[midway] {$f_1$};
        \draw[fill = red!15]  (2,0) rectangle (4,1) node[midway] {$f_1'$};
        \draw[fill = red!15]  (4,0) rectangle (6,1) node[midway] {$f_1'$};
        \draw[fill = blue!15]  (11,0) rectangle (14,1) node[midway] {$f_2'$};
        \draw[fill = blue!15]  (14,0) rectangle (17,1) node[midway] {$f_2'$};
        \draw[fill = green!80!black!40]  (7,0) rectangle (11,1) node[midway] {$f_3$};
        \draw[fill =  blue!40] (20,0) rectangle  (23,1) node[midway]
        {$f_2$};
        
        \foreach \i in {0,1,2,...,23}
        { \draw (\i,0) -- (\i,-0.15);}
        \foreach \i in {0,5,10,15,20}
        { \draw (\i,0) -- (\i,-0.15) node[below] {\scriptsize \i};}

      \end{tikzpicture}
\end{center}
\captionof{figure}{Illustration of Rule~\ref{filt:nJm}.}
\label{fig:nJm}

\vspace{0.5cm}
\end{example}

Let   $\setNJm$   denotes   the   set  composed   of   the   first   $
\overline{nbJobs}_m -  |\setARE|$ jobs  in $SPT$  order with  setup time
equal to $0$.

\begin{filter}
  \label{filt:nJm}
  If $\exists  m \in  \M$  such that $  FT^*(\setARE \cup
  \setNJm) > \overline{flowtime}_m$, then $nbJobs_{m} \le \overline{nbJobs}_{m} - 1 $
\end{filter}

\begin{proof}
 The arguments are similar to those for~\refR{extFT}.
\end{proof}

The complexity of Rule~\ref{filt:nJm} is $O ( M \cdot F \cdot \log{F}
+ M \cdot F^2 \cdot |{\cal D}_{nbJobs_{m}}^{MAX}|)$ with $|{\cal
D}_{nbJobs_{m}}^{MAX}|$ the maximum size  of the domain of variables
$nbJobs_{m}$.  Indeed,  as  for  Rule~\ref{filt:nJfm},  the  algorithm
\Seq only needs to be applied  once and then can be updated in
$O(F)$.  For each machine and each  family, this update has to be done
at most  $|{\cal D}_{nbJobs_{m}}^{MAX}|$ times.   Indeed, proceeding
by dichotomy here implies that $FT^*$  cannot be updated but has to be
re-computed each time. Thus, the complexity
of Rule~\ref{filt:nJm}  is $O (  M \cdot F  \cdot ( \log{F}+  F \cdot
|{\cal D}_{nbJobs_{m}}^{MAX}|))$.  

\section{Experimental Results}
\label{sec:expe}

This section starts with the  presentation of the general framework of
the experiments in~\ref{sec:framework}.   Following the framework, the
filtering rules are  evaluated in Section~\ref{sec:eval-rules}.  Then,
the   model   is   compared   to    those   of   the   literature   in
Section~\ref{sec:cmp-models}.  Last,  a brief sensitivity  analysis is
given in Section~\ref{sec:sens-ana}.


\subsection{Framework}
\label{sec:framework}

The experiment framework is defined so the following questions are addressed.
\begin{compactenum}[\normalfont \it Q1.]
\item Which filtering rule is efficient? Are filtering rules complementary?
\item Which model of the literature is the most efficient?
\item What is the impact of the heuristics? Of the bi-objective aggregation?  
\end{compactenum}
To address these questions, the following metrics are analyzed: number
of feasible solutions, number of proven optimums, upper bound quality;
solving times; number of fails (for CP only).  

The benchmark instances used to perform our experiments are extracted
from~\cite{Nattaf2018C}. In this paper, 19 instance sets are generated
with different number of jobs ($N$), machines ($M$), family ($F$) and
qualification schemes.  Each of the instance sets is a group of 30
instances.  There is  a total  of 570  feasible instances  with $N=20$
(180), $N=30$  (180), $N=40$  (30), $N=50$  (30), $N=60$  (60), $N=70$
(90).  

The naming scheme for the different solving algorithms is described in
Table~\ref{tab:encoding}.  The first letter represents the model where
\IP  ,  \CP  and  \CPN  denotes respectively  the  \IP  and  CP  model
of~\cite{Nattaf2019C},    and    the     CP    model    detailed    in
Section~\ref{sec:CP}.  The models are implemented using IBM ILOG CPLEX
Optimization Studio 12.10~\cite{Ilog12100}.  That is CPLEX for the \IP
and CP Optimizer for CP models. 
The second letter indicates whether two heuristics are executed to find
solutions which are used as a basis for the models. 
These  heuristics are  called {\it  Scheduling Centric  Heuristic} and
{\it Qualification Centric Heuristic}~\cite{Nattaf2018C}. 
The goal of the first heuristic is to minimize the flow time while the
second one tries to minimize the number of disqualifications. 
The third letter indicates the filtering rules that are activated for the \CPN.
\refR{extFT} is  used for  the letter \texttt{L}  because it  has been
shown more efficient than \refR{FT} in preliminary experiments. 
The  fourth  letter indicates  the  bi-objective  aggregation method:
lexicographic   optimization;    linearization   of   lexicographic
optimization.  
The last letter  indicates the objective priority.  Here, the priority
is given to the flow time in all experiments because the
cost-based filtering rules concern the flow time objective.

\begin{center}
  \begin{tabularx}{0.9\linewidth}{tl*{5}{|tl}}
    \toprule
    \multicolumn{2}{c}{Model}   &    \multicolumn{2}{c}{Heuristic}   &
    \multicolumn{4}{c}{Filtering                rule}                &
    \multicolumn{2}{c}{Bi-objective} & \multicolumn{2}{c}{Priority} \\ 
    \cmidrule(r){1-2} \cmidrule(lr){3-4} \cmidrule(lr){5-8} \cmidrule(lr){9-10} \cmidrule(l){11-12}
    I & \IP & \_ & None & L  & \refR{extFT} & \_ & None & S & Weighted
    sum & F & Flow time \\
    O & \CP       & H  & All  & F & \refR{nJfm}  & A  & All  & L & Lexicographic & Q & Disqualifications \\
    N & \CPN & & & M & \refR{nJm} & & & & \\
    \bottomrule
  \end{tabularx}
  \captionof{table}{Algorithms Encoding.}
  \label{tab:encoding}
\end{center}

  All the experiments were led on a Dell computer with 256 GB of RAM and
4 Intel  E7-4870 2.40  GHz processors running  CentOS Linux  release 7
(each processor has  10 cores).  The time limit for  each run is $300$
seconds.

\subsection{Evaluation of the filtering rules}
\label{sec:eval-rules}

In  this  section, the  efficiency  and  complementary nature  of  the
proposed  filtering  rules  are  investigated.  In  other  words,  the
algorithms \texttt{N\_*LF}  are studied.  To this end,  the heuristics
are   not  used   to  initialize   the  solvers.    The  lexicographic
optimization  is  used since  it  has  been  shown more  efficient  in
preliminary experiments.

\begin{figure}[!htbp]
  \centering
  \subfigure[Cumulative number of optima.\label{fig:rules-opt}]{%
    \includegraphics[width=0.65\textwidth]{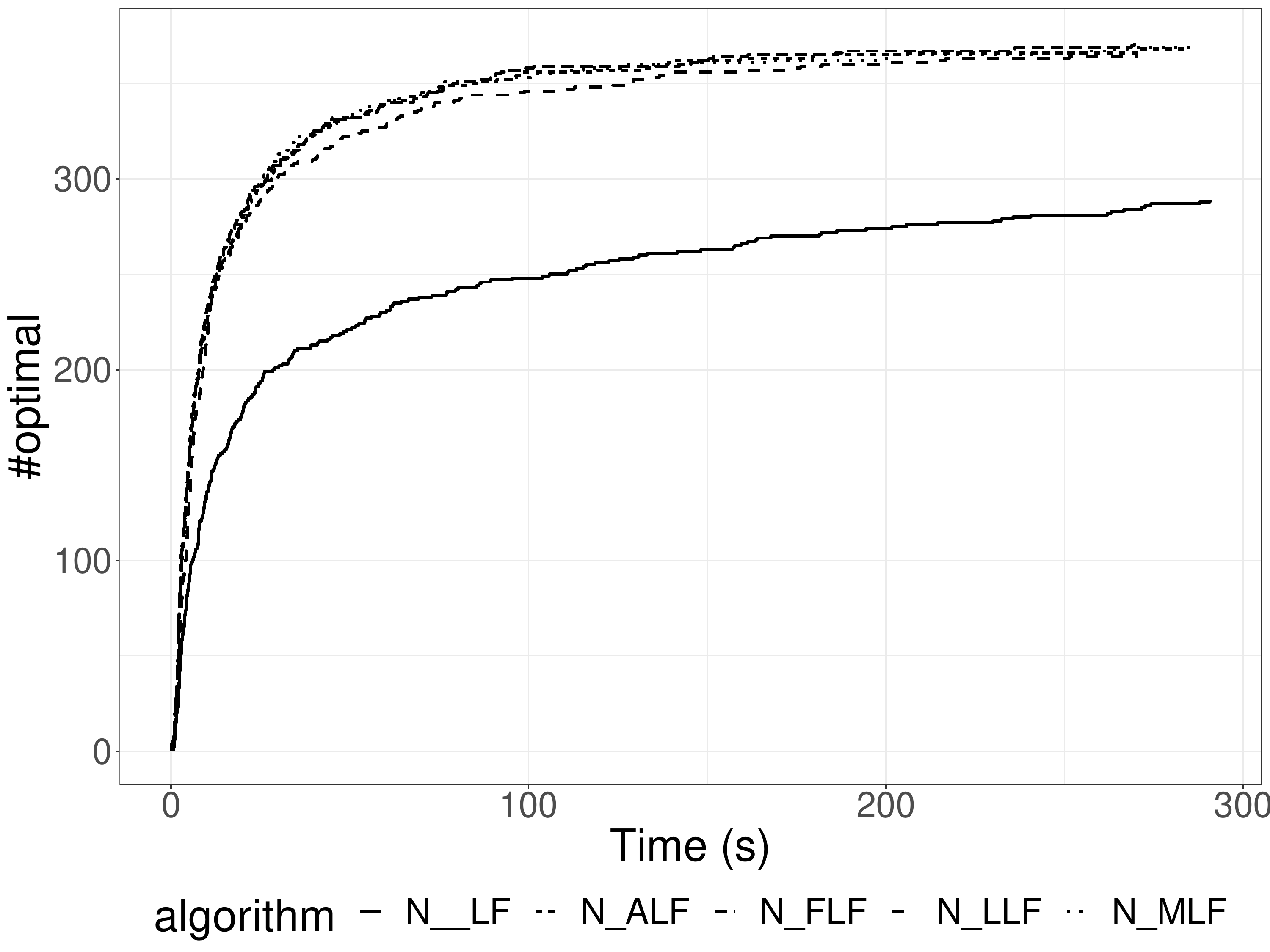}
  }
  \subfigure[Borda ranking.\label{fig:rules-rank}]{
    \hspace{10pt}%
    \begin{tabular}[b]{tr}
      \toprule
      \multicolumn{1}{c}{Algorithm} & \multicolumn{1}{c}{Score} \\
      \midrule
      N\_MLF & 1612 \\ 
      N\_ALF & 1644 \\
      N\_FLF & 1650 \\
      N\_LLF & 1712 \\
      N\_\_LF & 1932 \\
      \bottomrule
      \\ \\ \\
    \end{tabular}
  }  
  \caption{Evaluation of filtering rules.}
  \label{fig:rules}
\end{figure}

First, all  algorithms find feasible  solutions for more than  99\% of
the  instances.  Then,  for each  algorithm, the  number of  instances
solved   optimally  is   drawn  as   a   function  of   the  time   in
Figure~\ref{fig:rules-opt}. The leftmost curve  is the fastest whereas
the  topmost  curve proves  the  more  optima.  Clearly,  compared  to
\texttt{N\_\_LF}, the filtering rules  accelerates the proof and allow
the optimal solving of around eighty more instances.  One can notice that
the  advanced   filtering  rules   (\texttt{N\_FLF},  \texttt{N\_MLF},
\texttt{N\_ALF}),  also slightly  improves the  proof compared  to the
simple update of the flow time lower bound (\texttt{N\_LLF}). Here, the
advanced filtering rules are indistinguishable.

Table~\ref{fig:rules-rank} ranks  the filtering  rules according  to a
scoring    procedure    based    on    the    Borda    count    voting
system~\cite{brams.fishburn-02}.   In this  procedure, each  benchmark
instance  is treated  like a  voter  who ranks  the algorithms.   Each
algorithm  scores  points based  on  their  fractional ranking:  the
algorithms compare equal receive the same ranking number, which is the
mean of what  they would have under ordinal rankings.   Here, the rank
only depends on the answer: the solution status (optimum, satisfiable,
and unknown); and  then the  objective value. So,  the lower  is the
score, the  better is the  algorithm.  Once again, advanced  rules are
really close, and slightly above  the simple lower bound update.  They
clearly outperform the algorithm without cost-based filtering.

\subsection{Comparison to the literature}
\label{sec:cmp-models}

In this section, the best possible  algorithms using the \IP, \CP, and
\CPN are  compared. Here,  the heuristics are  used to  initialize the
solvers,   and   thus the   algorithms   \texttt{IH\_SF},
\texttt{OH\_LF}, \texttt{NHALF} are studied.

\begin{figure}[!htbp]
  \centering
  \subfigure[Cumulative number of optima.\label{fig:models-opt}]{%
    \includegraphics[width=0.65\textwidth]{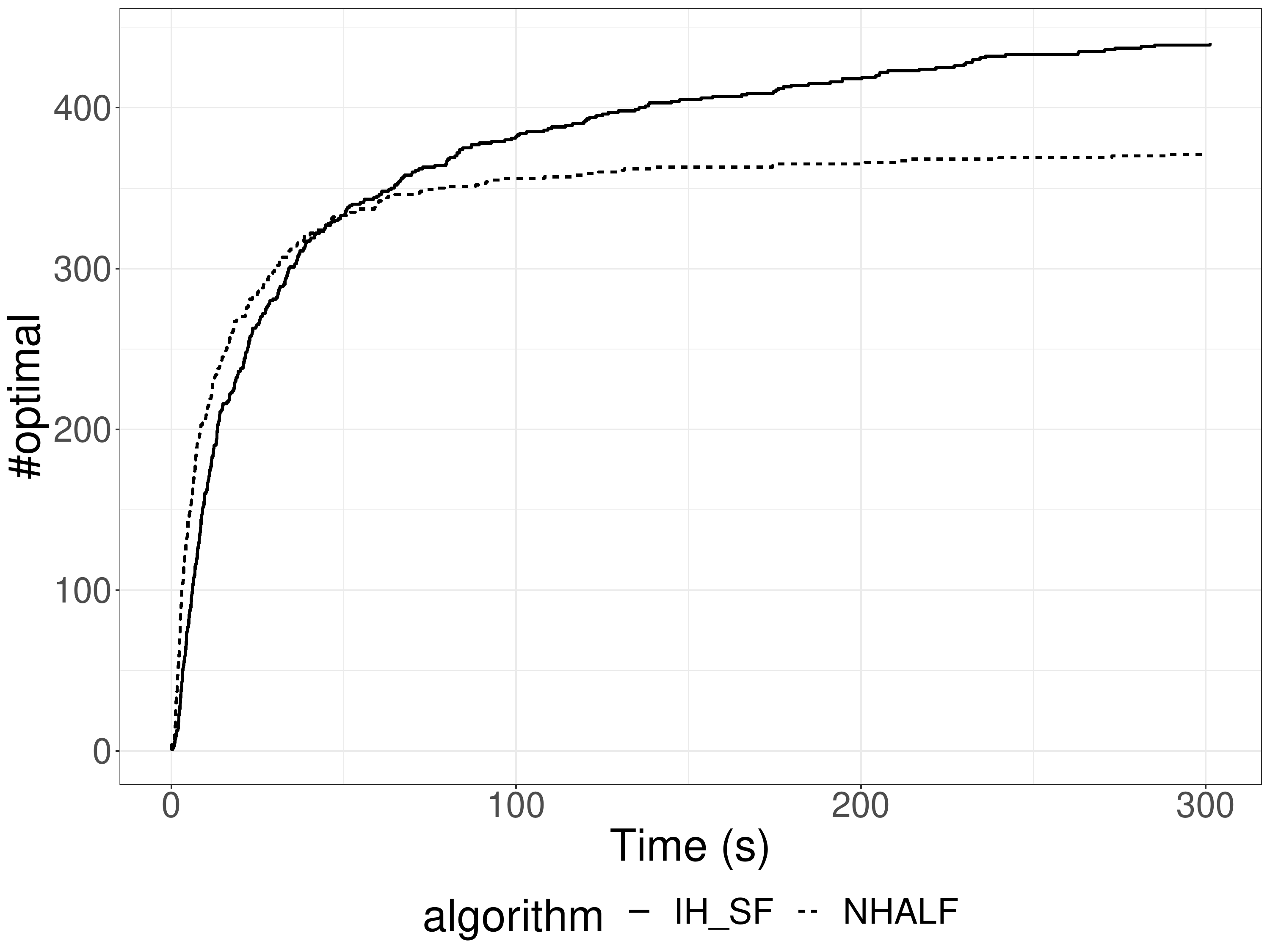}
  }
  \subfigure[Borda ranking.\label{fig:models-rank}]{
    \hspace{10pt}%
    \begin{tabular}[b]{tr}

      \toprule
      \multicolumn{1}{c}{Algorithm} & \multicolumn{1}{c}{Score} \\
      \midrule
      IH\_SF &  942.5 \\
      NHALF &  951.5 \\
      OH\_LF & 1526.0 \\
      \bottomrule
      \\ \\ \\
    \end{tabular}
  }  
  \caption{Comparison of the models.}
  \label{fig:models}
\end{figure}

First,  the  models  exhibit  very different  behaviors  in  terms  of
satisfiability and  optimality. Then,  for each  model, the  number of
instances  solved optimally  is drawn  as a  function of  the time  in
Figure~\ref{fig:models-opt}.  The  \CP is not visible  because it does
not prove any optimum.  The \CPN  is faster than the  \IP, but proves
less optima (around seventy).

Table~\ref{fig:models-rank} ranks  the models  rules according  to the
Borda  count  voting  system  (see  Section~\ref{sec:eval-rules}).  It
confirms  the  poor efficiency  of  the  \CP,  but brings  closer  the
performance of \CPN and \IP. 
Table~\ref{fig:ctab1} explains  these close scores. \IP
proves  $67=70-3$  more optima  than  \CPN,  but \CPN  finds  feasible
solutions for 32 more instances. 
\\
To conclude, the \CPN is competitive with the \IP and they offer
orthogonal performance since the \IP is more efficient for proving the
optimality  whereas the  \CPN is  more efficient  for quickly  finding
feasible solutions.

\subsection{Sensitivity analysis}
\label{sec:sens-ana}
Here,   the  impact   of  the  heuristics  and   of  the  bi-objective
optimization on the  efficiency of the models is analyzed.  The \CP is
excluded since it is clearly dominated by the two other models.

The   heuristics   have   a   little   impact   on   the   \CPN   (see
Table~\ref{fig:ctab1}). The solution status stay identical $98.5\%$ of
the time and the solving times  remain in the same order of magnitude.
However,  Table~\ref{fig:ctab2} shows  the significant
impact of the heuristics on the  \IP where the answer for 50 instances
becomes satisfiable.  For  both models, the heuristics do  not help to
prove more optima.
\\
Table~\ref{fig:ctab3} shows the significant impact of using the lexicographic
optimization instead of the weighted sum method on
the \CP.  Indeed, 69 instances with unknown status when using the weighted
sum    method    become    satisfiable   using    the    lexicographic
optimization. Note that the lexicographic optimization is not available for the \IP.

\begin{figure}[htb]
  \centering
  \subfigure[Two best algorithms.\label{fig:ctab1}]{
    \begin{tabular}[b]{l|rrr}
      \toprule
      \multicolumn{1}{t|}{NHALF}& \multicolumn{3}{t}{IH\_SF} \\
      & \multicolumn{1}{c}{OPT} & \multicolumn{1}{c}{SAT} & \multicolumn{1}{c}{UNK} \\
      \midrule
      OPT & 369  &  3 &   0 \\
      SAT &  70  & 94 &  32 \\
      UNK &   1  &  0 &   1 \\
      \bottomrule
    \end{tabular}
  }\hfill
    \subfigure[Heuristics impact.\label{fig:ctab2}]{
    \begin{tabular}[b]{l|rrr}
      \toprule
      \multicolumn{1}{t|}{I\_\_SF}& \multicolumn{3}{t}{IH\_SF} \\
      & \multicolumn{1}{c}{OPT} & \multicolumn{1}{c}{SAT} & \multicolumn{1}{c}{UNK} \\
      \midrule
      OPT & 433 &   3 &   0 \\
      SAT &   6 &  44 &   0 \\
      UNK &   1 &  50 &  33 \\
      \bottomrule

    \end{tabular}
  }\hfill
  \subfigure[Aggregation impact. \label{fig:ctab3}]{
    \begin{tabular}[b]{l|rrr}
      \toprule
      \multicolumn{1}{t|}{N\_ALF}& \multicolumn{3}{t}{N\_ASF} \\
      & \multicolumn{1}{c}{OPT} & \multicolumn{1}{c}{SAT} & \multicolumn{1}{c}{UNK} \\
      \midrule
      OPT & 360 &   8 &   1 \\
      SAT &   4 & 125 &  69 \\
      UNK &   0 &   0 &   3 \\
      \bottomrule

    \end{tabular}
  }
  \caption{Contingency table  of the solution status  between pairs of
    algorithms.} 
  \label{fig:ctab}
\end{figure}

\section{Conclusion}

In  this  paper, cost-based  domain  filtering  has been  successfully
applied to  an efficient  constraint programming model  for scheduling
problems with setup on parallel machines. 
The  filtering rules  derive from  a polynomial-time  algorithm which
minimize the flow time for a single machine relaxation. 
Experimental  results   have  shown  the  rules   efficiency  and  the
competitiveness of the overall algorithm. 
The first perspective  is to tackle larger  industrial instances since
CP relies on its ability at finding feasible solutions. 
The second perspective is to pay  more attention to the propagation of
the flow time based on what has been done for the makespan.

\bibliographystyle{plainnat}
\bibliography{biblioAPC}
\end{document}